\def\xm{\mathbf{X}}
\def\xv{\mathbf{x}}
\def\yv{\mathbf{y}}
\def\vv{\mathbf{v}}
\def\pv{\mathbf{p}}
\def\wv{\mathbf{w}}
\def\bv{\bm{\beta}}
\DeclareMathOperator*{\argmin}{argmin}
\theoremstyle{plain}
\newtheorem{theorem}{Theorem}[section]
\theoremstyle{Definition}
\theoremstyle{remark}
\newcommand{\diag}[1]{\operatorname{diag}\left(#1\right)}
\newcommand{\sign}[1]{\operatorname{sign}\left(#1\right)}
  \providecommand\BibTeX{{%
    \normalfont B\kern-0.5em{\scshape i\kern-0.25em b}\kern-0.8em\TeX}}}
\begin{document}

\title{On Regularized Sparse Logistic Regression}

\author{
	\IEEEauthorblockN{\ Mengyuan Zhang}
	\IEEEauthorblockA{
		\textit{Clemson University}\\
	\ 	Clemson, SC, USA\\
	\	\ mengyuz@clemson.edu
	}\and
	\IEEEauthorblockN{\ Kai Liu}
	\IEEEauthorblockA{ 
		\textit{Clemson University}\\
		\ \ Clemson, SC, USA\\
		\ \ \ kail@clemson.edu
	}
}
\maketitle 

\begin{abstract}
 Sparse logistic regression is for classification and feature selection simultaneously. Although many studies have been done to solve $\ell_1$-regularized logistic regression, there is no equivalently abundant work on solving sparse logistic regression with nonconvex regularization term. In this paper, we propose a unified framework to solve $\ell_1$-regularized logistic regression, which can be naturally extended to nonconvex regularization term, as long as  certain requirement is satisfied.
 In addition, we also utilize a different line search criteria to guarantee monotone convergence for various regularization terms. Empirical experiments on binary classification tasks with real-world datasets demonstrate our proposed algorithms are capable of performing classification and feature selection effectively at a lower computational cost.
\end{abstract}

\begin{IEEEkeywords}
logistic regression, sparsity, feature selection
\end{IEEEkeywords}

\section{Introduction}
Logistic regression has been applied widely in many areas as a method of classification. 
The goal of logistic regression is to maximize the likelihood  based on the observation of training samples, with its objective function formulated as follows with a natural and meaningful probabilistic interpretation:
\begin{equation}
\begin{split}
    \min_{\bv} \sum_{i=1}^{n} -\ln {p(y_i | \xv_i; \bv)}
    = \sum_{i=1}^{n} \ln (1+exp(\bv^T\xv_i)) - y_i \bv^T\xv_i,
\end{split}
\end{equation}
where $\xv_i$  and $y_i$ denote the $i$-th sample and its label. 

Though logistic regression is straightforward and effective, its performance can be diminished due to over-fitting~\cite{hawkins2004problem}, especially when the dimensionality of features is very high compared to the number of available training samples. Therefore, regularization term is usually introduced to alleviate over-fitting issue~\cite{ng2004feature}. 
Also, in applications with high-dimensional data, it's desirable to obtain sparse solutions, since in this way we are conducting classification and feature selection at the same time. Therefore, $\ell_1$-regularized logistic regression has received more attention with the sparsity-inducing property and its superior empirical performance~\cite{abramovich2018high}.

More recent studies show that $\ell_1$-norm regularization may suffer from implicit bias problem that would cause significantly biased estimates, and such bias problem can be mitigated by a nonconvex penalty~\cite{hastie2015statistical}. Therefore, nonconvex regularization has also been studied to induce sparsity in logistic regression~\cite{yuan2023feature}. 
Solving sparse logistic regression with convex $\ell_1$-norm regularizer or with nonconvex term using a unified algorithm has been studied in~\cite{loh2013regularized}. However, it imposes strong regularity condition on the nonconvex  term and it transfers the nonconvex problem into an $\ell_1$-norm regularized surrogate convex function, which limits its generality. As a contribution, we extend the scope of nonconvex penalties to a much weaker assumption and compare the performance of different regularization terms with a unified optimization framework.\par
In this paper, we solve $\ell_1$-regularized (sparse) logistic regression by proposing a novel framework, which can be applied to non-convex regularization term as well.
The idea of our proposed method  stems from the well know Iterative Shrinkage Thresholding Algorithm (ISTA) and its accelerated version Fast Iterative Shrinkage Thresholding Algorithm (FISTA)~\cite{beck2009fast}, upon which we modify the step-size setting and line search criteria to make the algorithm applicable for both convex and nonconvex regularization terms with empirical faster convergence rate. 
To be clear, we call any logistic regression with regularization term that can produce sparse solutions as \textit{sparse logistic regression}, therefore, the term is not only limited to $\ell_1$-norm regularization.  
\section{Related Work} \label{sec:2}




Due to the NP-hardness of $\ell_0$-norm constraint problem, being its tightest convex envelope, $\ell_1$-norm is widely taken as an alternative to induce sparsity~\cite{liu2018high,liu2018learning,zhang2022enriched}. 
The main drawback of $\ell_1$-norm is it's non-differentiable, which makes the computation challenging compared to squared $\ell_2$-norm. Sub-gradient is an option but can be very slow even if we disregard the fact that it is non-monotone. Besides, there has been active research on numerical algorithms to solve $\ell_1$-regularized logistic regression. 
Among these, an intuitive idea is as the challenge originates from the non-smoothness of $\ell_1$-norm, we can make it \lq{smoothable}\rq. 
For example, in~\cite{schmidt2007fast}, the $\ell_1$-norm is approximated by a smooth function that is readily solvable by any applicable optimization method. The \textit{iteratively reweighted least squares least angle regression} (IRLS-LARS) method converts the original problem into a smooth  function by the equivalent $\ell_1$-norm constrained ball~\cite{lee2006efficient}. 
In addition, coordinate descent method has been utilized to solve the $\ell_1$-regularized logistic regression as well: in ~\cite{genkin2007large} and ~\cite{madigan2005bayesian}, cyclic coordinate descent is used for optimizing Bayesian logistic regression. Besides, the interior-point method is another option with truncated Newton step and conjugated gradient iterations~\cite{koh2007interior}. We refer readers to the papers and references therein.



The bias of $\ell_1$-norm is implicitly introduced as it penalizes the parameters with larger coefficients more than the smaller ones. Recently, nonconvex regularization terms have drawn considerable interest in sparse logistic regression since it is able to ameliorate the bias problem of $\ell_1$-norm, and it acts as one main driver of recent progress in nonconvex and nonsmooth optimization~\cite{wen2018survey}. 


It is worth noting that there have been many studies trying to solve the $\ell_0$-regularized problems directly. 
The most common method is to conduct coordinate descent for nonconvex regularized logistic regression~\cite{breheny2011coordinate,mazumder2011sparsenet}.
The alternating direction method of multipliers (ADMM) inspires the development of incremental aggregated proximal ADMM to solve nonconvex optimization problems, and it achieves good results in sparse logistic regression~\cite{jia2021incremental}. 
In~\cite{marjanovic2015mist}, the momentumized iterative shrinkage thresholding (MIST) algorithm is proposed to minimize the nonconvex criterion for linear regression problems, and  similar ideas can be applied to logistic regression as well. Besides the $\ell_0$-norm, other nonconvex penalties are also explored. The minimax concave penalty (MCP) is studied in~\cite{zhang2010nearly} together with a penalized linear unbiased selection (PLUS) algorithm, and it shows the MCP is unbiased with superior selection accuracy.
The smoothly clipped absolute deviation (SCAD) penalty is proposed in~\cite{fan2001variable}, which corresponds to a quadratic spline function, and the study shows  SCAD penalty outperforms the $\ell_1$-norm regularizer significantly, and it has the best performance in selecting significant variables without introducing excessive biases. 
\begin{table}
	\begin{center}
		\caption{Nonconvex regularization terms in sparse logistic regression.}
		\label{tab:nonconvexPenalty}
		\begin{tabular}{ c c }
			\toprule
			Name & Formulation \\
			\midrule
			$\ell_p$-norm, $0 < p < 1$ & $\lambda |\beta_i|^q$ \\  
			\midrule
			Capped $\ell_1$-norm & $\lambda\min{( |\beta_i|, \epsilon)}$ \\
			\midrule
			SCAD & $\begin{cases} 
				\lambda|\beta_i|, &  |\beta_i| \leq \lambda \\
				\frac{-\beta_i^2 + 2\theta\lambda|\beta_i|-\lambda^2}{2(\theta-1)}, &  \lambda < |\beta_i| \leq \theta\lambda \\
				\frac{(\theta+1)\lambda^2}{2}, & | \beta_i| > \theta\lambda \\
			\end{cases}$\\
			\midrule 
			MCP & $\begin{cases}
				\lambda|\beta_i|-\frac{\beta_i^2}{2\theta}, & \text{if} |\beta_i| \leq \theta\lambda \\
				\frac{\theta\lambda^2}{2}, & \text{if} |\beta_i| > \theta\lambda\\
			\end{cases}$ \\
			\bottomrule
		\end{tabular}
	\end{center}
\end{table}

\section{Optimization Algorithms} \label{sec:3}

%
\subsection{Algorithms for $\ell_1$ Regularized Sparse Logistic Regression}

We first consider  sparse logistic regression problem as:
\begin{equation}\begin{split}
\label{ol1}
      \min_{\bv} f(\bv)= \underbrace{\sum_i ln(1+\exp(\xv_i^T\bv))-y_i(\xv_i^T\bv)}_{l(\bv)}+\underbrace{\lambda \|\bv\|_1}_{g(\bv)}.
\end{split}\end{equation}




\begin{theorem}\label{th:lip}
$\nabla l(\bv)$ in Eq~(\ref{ol1}) is Lipschitz continuous, and it's Lipschitz constant is $L:= \frac{1}{4}\lambda_{max}(\xm\xm^T)$.
\end{theorem}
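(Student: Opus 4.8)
The plan is to exploit the standard fact that for a twice continuously differentiable function, Lipschitz continuity of the gradient with constant $L$ is equivalent to the uniform spectral-norm bound $\|\nabla^2 l(\bv)\|_2 \le L$ for all $\bv$. So rather than estimating $\|\nabla l(\bv) - \nabla l(\bv')\|$ directly, I would pass to the Hessian and bound its largest eigenvalue. Since the logistic link is smooth, $l$ is clearly $C^2$, so this reduction is legitimate.

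First I would compute the derivatives. Writing $\sigma(t) = \frac{\exp(t)}{1+\exp(t)}$ for the logistic link, differentiating $l$ term by term gives
\begin{equation}
\nabla l(\bv) = \sum_i \bigl(\sigma(\xv_i^T\bv) - y_i\bigr)\,\xv_i,
\end{equation}
and differentiating once more, using $\sigma'(t) = \sigma(t)\bigl(1-\sigma(t)\bigr)$, yields
\begin{equation}
\nabla^2 l(\bv) = \sum_i \sigma(\xv_i^T\bv)\bigl(1-\sigma(\xv_i^T\bv)\bigr)\,\xv_i\xv_i^T.
\end{equation}
The key elementary estimate is the scalar bound $\sigma(t)\bigl(1-\sigma(t)\bigr) \le \tfrac14$, valid for every real $t$ with equality at $t=0$: this is just $u(1-u)$ with $u=\sigma(t)\in(0,1)$, maximized at $u=\tfrac12$.

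Because each $\xv_i\xv_i^T$ is positive semidefinite and the scalar weights are nonnegative, I can replace every weight by its maximum $\tfrac14$ in the Loewner order, obtaining
\begin{equation}
0 \preceq \nabla^2 l(\bv) \preceq \tfrac14\sum_i \xv_i\xv_i^T = \tfrac14\,\xm\xm^T,
\end{equation}
where the last identity uses the convention that the columns of $\xm$ are the samples $\xv_i$ (if instead the samples are stored as rows, one replaces $\xm\xm^T$ by $\xm^T\xm$, whose nonzero spectrum is identical, via $\lambda_{max}(\xm\xm^T)=\lambda_{max}(\xm^T\xm)$). Taking largest eigenvalues of both sides gives $\|\nabla^2 l(\bv)\|_2 = \lambda_{max}\bigl(\nabla^2 l(\bv)\bigr) \le \tfrac14\lambda_{max}(\xm\xm^T) = L$, uniformly in $\bv$.

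To finish, I would convert this uniform Hessian bound into the Lipschitz inequality via the fundamental theorem of calculus along the segment joining $\bv$ and $\bv'$, namely $\nabla l(\bv) - \nabla l(\bv') = \bigl(\int_0^1 \nabla^2 l\bigl(\bv' + s(\bv-\bv')\bigr)\,ds\bigr)(\bv-\bv')$, whence $\|\nabla l(\bv) - \nabla l(\bv')\|_2 \le L\,\|\bv-\bv'\|_2$. I do not expect a real obstacle here; the only points deserving care are fixing the matrix convention for $\xm$ (handled by the eigenvalue identity above) and verifying that the constant $\tfrac14$ is actually attained, which confirms that $L$ is the sharpest constant this Hessian argument can yield.
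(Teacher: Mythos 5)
Your proposal is correct and follows essentially the same route as the paper: compute the Hessian $\nabla^2 l(\bv) = \xm\,\diag{p_i(1-p_i)}\,\xm^T$, bound its spectral norm by $\tfrac14\lambda_{max}(\xm\xm^T)$ using $p_i(1-p_i)\le\tfrac14$, and convert the uniform Hessian bound into the Lipschitz estimate. The only (minor) difference is that you use the integral form $\nabla l(\bv)-\nabla l(\bv') = \bigl(\int_0^1 \nabla^2 l(\bv'+s(\bv-\bv'))\,ds\bigr)(\bv-\bv')$ where the paper invokes a mean value theorem for the vector-valued map $\nabla l$; your version is actually the more careful one, since the equality form of the mean value theorem does not hold for vector-valued functions in general, whereas the fundamental-theorem-of-calculus argument is airtight.
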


\begin{proof}
	Let $\xm$ be the matrix of training samples, where the $i$-th column $\xv_i$ represents the $i$-th sample.
	We have
	\begin{equation}
		\nabla l(\bv) = \xm(\pv-\yv), \textrm{where } p_i = \frac{1}{1+exp(-\langle \bv, \xv_i \rangle)}
	\end{equation}
	and
	\begin{equation}
		\nabla^2 l(\bv) = \xm \diag{(p_i(1-p_i))}\xm^T.
	\end{equation}
	By the mean value theorem, we know there exists
	$\mathbf{c} \in (\bv, \bv+\Delta)$
	such that 
		$\nabla l(\bv+\Delta) - \nabla l(\bv) = \nabla^2 l(\mathbf{c}) \Delta$.
	Since
	\begin{equation}
		\begin{split}
			\|\nabla^2 l(\bv)\| & \leq \lambda_{max}(\xm \diag{(p_i(1-p_i))}\xm^T) \\
			& = \textrm{max}_{\|\vv\|=1} \vv^T \xm \diag{p_i(1-p_i)}\xm^T \vv \\
			& = \textrm{max}_{\|\vv\|=1} \vv^T [\sum (\xv_i p_i(1-p_i) \xv_i^T)] \vv \\
			& \leq \frac{1}{4} \textrm{max}_{\|\vv\|=1} \vv^T [\sum (\xv_i \xv_i^T)] \vv 
			= \frac{1}{4}\lambda_{max}(\xm\xm^T),
		\end{split}
	\end{equation}
	we have
	\begin{equation}
		\begin{split}
			& \|\nabla l(\bv+\Delta) - \nabla l(\bv)\| =
			\|\nabla^2 l(\mathbf{c}) \Delta\| \\ \leq &\|\nabla^2 l(\mathbf{c})\|\|\Delta\|
			\leq  \frac{1}{4}\lambda_{max}(\xm\xm^T) \|\Delta\|,
		\end{split}
	\end{equation}
	thus $\nabla l(\bv)$ is Lipschitz continuous with $\frac{1}{4}\lambda_{max}(\xm\xm^T)$.
\end{proof}

We use ISTA and  FISTA with backtracking line search to solve Eq~(\ref{ol1}), which are described in Algorithm~\ref{alg:l1ista} and ~\ref{alg:l1fista} respectively, where $p_L(\bv)$ represents the proximal operator defined as
    $p_L(\bv) = \argmin_{\wv} \frac{L}{2} \|\wv - (\bv - \frac{1}{L} \nabla l(\bv) \|^2 + g (\wv)$.
The line search stopping criterion is:
$f(p_{\bar{L}}(\bv_{k-1})) \leq q_{\bar{L}}(p_{\bar{L}}(\bv_{k-1}),\bv_{k-1})$,
where $f(\bv)$ is defined in Eq~(\ref{ol1}), and 
\begin{equation}\begin{split}
& q_L(p_L(\bv), \bv) \\
=&  l(\bv) + \langle p_L(\bv)-\bv, \nabla l(\bv) \rangle +\frac{L}{2}\| p_L(\bv) - \bv\|^2 + g(p_L(\bv)).
\end{split}\end{equation}

One potential drawback of the vanilla ISTA and FISTA is the initial step size which is randomly set as $L_0>0$ and keep increasing $L_k$ during update to satisfy the line search stopping criterion. In case $L_0$ is larger than the Lipschitz constant $L$ of $\nabla l(\bv)$, the step size can be too small to obtain optimal solution rapidly~\cite{zhang2023multi}. Thus, different from vanilla ISTA, in Algorithm~\ref{alg:l1ista}, we first initialize the stepsize with Lipschitz continuous constant and then utilize Barzilai-Borwein (BB) rule to serve as a starting point for backtracking line search:
\begin{equation}\begin{split}\label{eq:bb}
	\bm{\delta}_k = \bv_{k-1} - \bv_{k-2},
  \mathbf{v}_k = \nabla l(\bv_{k-1})-\nabla l(\bv_{k-2}), 
  L_k = \frac{\langle \bm{\delta}_k, \mathbf{v}_k \rangle}{\langle \bm{\delta}_k, \bm{\delta}_k \rangle}.
\end{split}\end{equation}
Experiments on synthetic data show that with BB rule, ISTA with randomly initialized step size will admit faster convergence during update, which is demonstrated in Figure~\ref{fig:randomBB}. 
\begin{figure}[!h]
\begin{minipage}[t]{0.485\linewidth}
    \centering
    \includegraphics[width=0.95\textwidth]{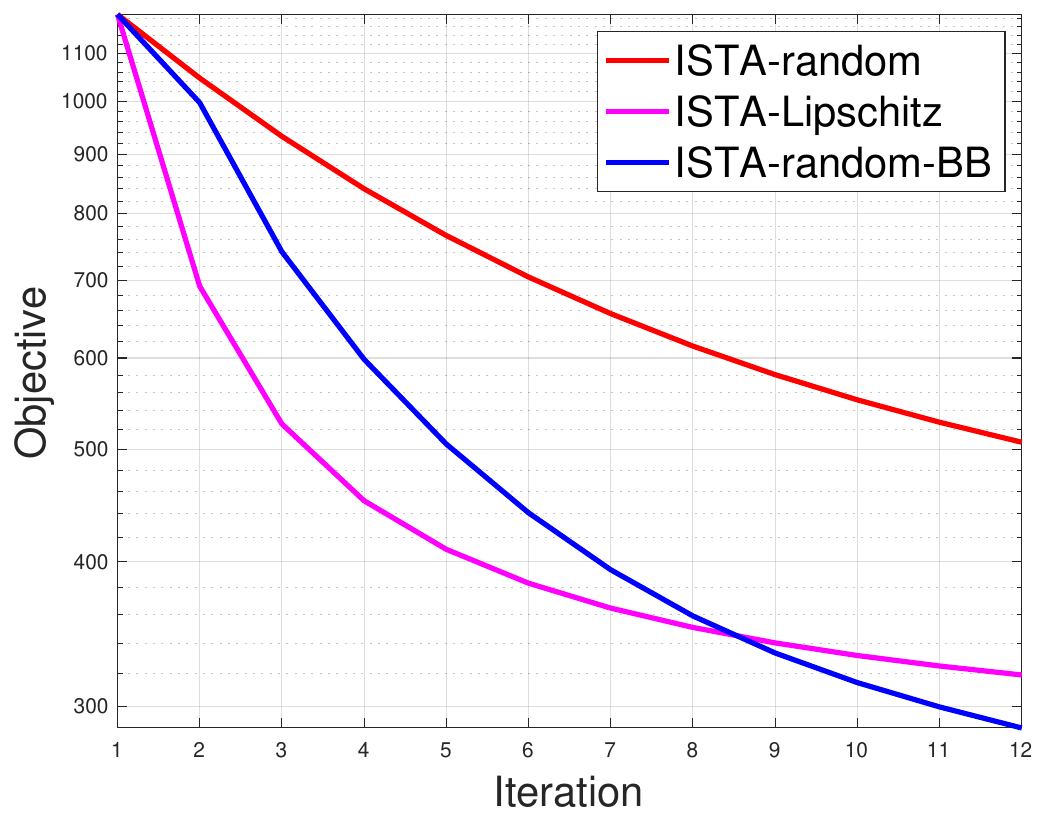}
    \caption{ISTA with different settings.}
    \label{fig:randomBB}
\end{minipage}
\hspace{0.1cm}
\begin{minipage}[t]{0.485\linewidth} 
    \centering
    \includegraphics[width=0.95\textwidth]{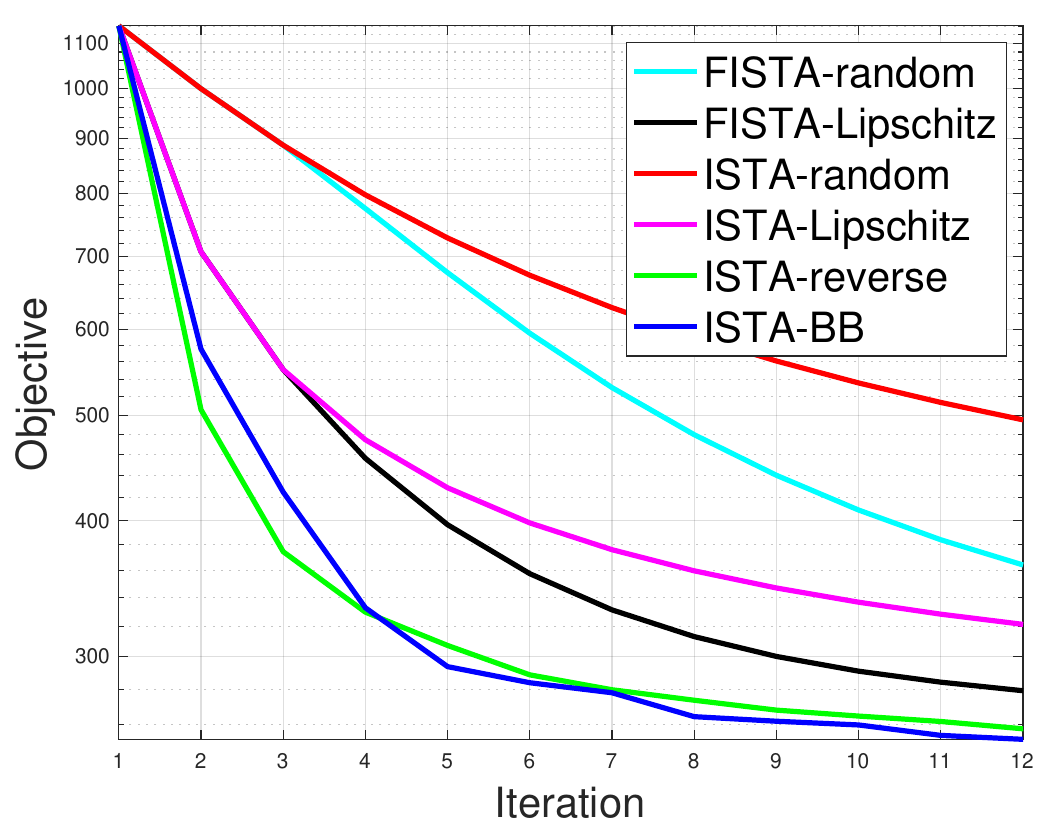}
    \caption{$\mathcal{L}_1$-norm: convergence plots.}
    \label{fig:convexAll}
\end{minipage}        
\end{figure}  
In Algorithm~\ref{alg:l1fista}, we also initialize the step size as $\frac{1}{L}$ to avoid too small step size. However, to guarantee $O(\frac{1}{k^2})$ convergence rate for FISTA, we still need the step size to be monotonically nonincreasing, thus BB rule cannot be utilized in FISTA. 
\begin{algorithm}[h!]
\caption{ISTA-BB: ISTA with Lipschitz constant and BB rule to solve Eq~(\ref{ol1}).}
\label{alg:l1ista}
\begin{algorithmic}
	\STATE Initialize ${\bv_0}$, step size $\frac{1}{L_0}$ as $\frac{1}{L}$, where $L$ is the Lipschitz constant of $\nabla l(\bv)$, set $\eta>1$;
	\REPEAT
        \STATE 1) Start from $k=2$, update the step size $\frac{1}{L_k}$ using the Barzilai-Borwein (BB) rule
        \STATE 2) Find the smallest nonnegative integer $i_k$ such that with $\bar{L} = \eta^{i_k}L_{k}$ 
        we have
        $f(p_{\bar{L}}(\bv_{k-1})) \leq q_{\bar{L}}(p_{\bar{L}}(\bv_{k-1}),\bv_{k-1})$\;.
	\STATE 3) Set $L_k = \eta^{i_k}L_{k}$ and update $\bv_k = p_{L_k}(\bv_{k-1})$\;
\UNTIL{convergence}
	\end{algorithmic}
\end{algorithm}

\begin{algorithm}[h!]
\caption{FISTA-Lipschitz: FISTA with Lipschitz constant to solve Eq~(\ref{ol1}).}
\label{alg:l1fista}
\begin{algorithmic}
	\STATE Initialize ${\bv_0}$, step size $\frac{1}{L_0}$ as $\frac{1}{L}$, where $L$ is the Lipschitz constant of $\nabla l(\bv)$, set $\eta>1$, $\wv_1 = \bv_0$, $t_1 = 1$;
	\REPEAT  
        \STATE 1) Find the smallest nonnegative integer $i_k$ such that with $\bar{L} = \eta^{i_k}L_{k-1}$ 
        we have
        $f(p_{\bar{L}}(\wv_{k})) \leq q_{\bar{L}}(p_{\bar{L}}(\wv_{k}),\wv_k)$\;.
	\STATE 2) Set $L_k = \eta^{i_k}L_{k-1}$ and update 
 \begin{equation}\begin{split}
&  \bv_k = p_{L_k}(\wv_{k}) \\
& t_{k+1} = \frac{1+\sqrt{1+4t_k^2}}{2} \\
& \wv_{k+1} = \bv_k + \frac{t_k - 1}{t_{k+1}}(\bv_k - \bv_{k-1})
 \end{split}\end{equation}
\UNTIL{convergence}
	\end{algorithmic}
\end{algorithm}

Besides the step size setting in BB rule aforementioned, another option proposed by us is to find the largest step-size by searching \textit{reversely}: we start by setting the step-size to $\frac{1}{L}$ in each iteration and keep enlarging it until the line search condition is not satisfied and take the last step-size satisfying the criterion. In this way, we are able to find the largest step size satisfying the line search criterion for each update iteration. 
The proposed method is summarized in Algorithm~\ref{alg:l1reverse}. Again, unlike the conventional ISTA, the step size in Algorithm~\ref{alg:l1reverse} is not decreasing monotonically, and to compute the largest singular value with lower computational cost, we can utilize power iteration method~\cite{booth2006power} to get rid of the computationally expensive singular value decomposition. 
\begin{algorithm}[h!]
\caption{ISTA-reverse: ISTA with Lipschitz constant and reverse step size searching to solve Eq~(\ref{ol1}).}
\label{alg:l1reverse}
\begin{algorithmic}
	\STATE Initialize ${\bv_0}$ randomly, step size $\frac{1}{L_0}$ as $\frac{1}{L}$, where $L$ is the Lipschitz constant of $\nabla l(\bv)$, set $\eta>1$;
	\REPEAT
        \STATE 1) Find the smallest nonnegative integer $i_k$ such that with $\bar{L} = L_0/ \eta^{i_k}$ 
        we have
        $f(p_{\bar{L}}(\bv_{k-1})) > q_{\bar{L}}(p_{\bar{L}}(\bv_{k-1}),\bv_{k-1})$\;.
	\STATE 2) Set $L_k = L_0/ \eta^{i_k-1}$ and update $\bv_k = p_{L_k}(\bv_{k-1})$\;
\UNTIL{convergence}
	\end{algorithmic}
\end{algorithm}

By Algorithm~\ref{alg:l1reverse}, the objective decreases much faster than  vanilla ISTA. From Figure~\ref{fig:convexAll} we can see that for FISTA, when we initialize the step-size with $\frac{1}{L}$ (FISTA-Lipschitz), it has better convergence performance than initialized with a random number (FISTA-random), which might be smaller than $\frac{1}{L}$. For ISTA, the two algorithms proposed by us (ISTA-BB and ISTA-reverse) have similar performance and they obviously outperform vanilla ISTA with backtracking line search, either ISTA-random or ISTA-Lipschitz. \\
The convergence proof of Algorithm~\ref{alg:l1ista}, Algorithm~\ref{alg:l1fista}, and Algorithm~\ref{alg:l1reverse} can be easily adapted from the proof in~\cite{beck2009fast}, here we only present the key theorems of the convergence rate. 


\begin{theorem}\label{them:ista}
Let $\{\bv_k\}$ be the sequence generated by either Algorithm~\ref{alg:l1ista} or Algorithm~\ref{alg:l1reverse}, for any $k > 1$, 
\begin{equation}
    f(\bv_k) - f(\bv^*) = O(\frac{1}{k}),
\end{equation}
where $\bv^*$ is an optimal solution to Eq~(\ref{ol1}). 
\end{theorem}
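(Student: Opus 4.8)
The plan is to adapt the standard proximal-gradient analysis of ISTA to our non-monotone step-size rules. The whole argument rests on two ingredients that Theorem~\ref{th:lip} and the convexity of $g(\bv)=\lambda\|\bv\|_1$ supply: a uniform upper bound on the accepted step-size constants, and the Beck--Teboulle one-step inequality. First I would argue that the backtracking terminates and that the accepted constants $L_k$ are bounded. Because $\nabla l$ is $L$-Lipschitz (Theorem~\ref{th:lip}), the descent lemma guarantees that the line-search criterion $f(p_{\bar L}(\bv_{k-1}))\le q_{\bar L}(p_{\bar L}(\bv_{k-1}),\bv_{k-1})$ holds for every $\bar L\ge L$, so only finitely many trials are ever needed. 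For Algorithm~\ref{alg:l1ista} the BB estimate satisfies $\langle \bm{\delta}_k,\mathbf{v}_k\rangle\le L\|\bm{\delta}_k\|^2$ by Lipschitz continuity, so the raw BB value never exceeds $L$ and the backtracking inflates it by at most one factor of $\eta$, giving $L_k\le \eta L=:L_{\max}$; for Algorithm~\ref{alg:l1reverse} the reverse search starts from $L_0=L$ and only ever retains a value for which the criterion still holds, so $L_k\le L=:L_{\max}$. Either way $L_k\le L_{\max}$ for all $k$.

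Next I would establish the core one-step estimate: for $\bv_k=p_{L_k}(\bv_{k-1})$ satisfying the line-search criterion, and for every point $\bm{x}$,
\begin{equation}
f(\bm{x})-f(\bv_k)\ \ge\ \frac{L_k}{2}\|\bv_k-\bv_{k-1}\|^2+L_k\langle \bv_{k-1}-\bm{x},\ \bv_k-\bv_{k-1}\rangle .
\end{equation}
This follows by writing the optimality (subgradient) condition of the proximal subproblem defining $p_{L_k}$, combining it with the convexity of $l$ and with the quadratic upper-bound inequality that the accepted line-search criterion certifies, exactly as in~\cite{beck2009fast}. Taking $\bm{x}=\bv_{k-1}$ shows $f(\bv_k)\le f(\bv_{k-1})$, so $\{f(\bv_k)\}$ is nonincreasing; taking $\bm{x}=\bv^*$ and invoking the polarization identity $\|\bv_k-\bv_{k-1}\|^2+2\langle \bv_{k-1}-\bv^*,\bv_k-\bv_{k-1}\rangle=\|\bv_k-\bv^*\|^2-\|\bv_{k-1}-\bv^*\|^2$ turns the estimate into
\begin{equation}
\frac{2}{L_k}\bigl(f(\bv_k)-f(\bv^*)\bigr)\ \le\ \|\bv_{k-1}-\bv^*\|^2-\|\bv_k-\bv^*\|^2 .
\end{equation}

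The hard part, and the only real departure from~\cite{beck2009fast}, is that $L_k$ is not monotone under either rule, so the telescoping cannot be carried out with a single common constant. I would resolve this by observing that $f(\bv_k)-f(\bv^*)\ge 0$ together with $L_k\le L_{\max}$ gives $\tfrac{2}{L_{\max}}(f(\bv_k)-f(\bv^*))\le \tfrac{2}{L_k}(f(\bv_k)-f(\bv^*))$, so the displayed inequality persists with $L_k$ replaced by the fixed constant $L_{\max}$. Summing over $n=1,\dots,k$ telescopes the right-hand side to at most $\|\bv_0-\bv^*\|^2$, and the monotonicity of $\{f(\bv_n)\}$ lets me lower-bound the sum by $k\,(f(\bv_k)-f(\bv^*))$. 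Rearranging yields
\begin{equation}
f(\bv_k)-f(\bv^*)\ \le\ \frac{L_{\max}\,\|\bv_0-\bv^*\|^2}{2k}=O\!\left(\tfrac{1}{k}\right),
\end{equation}
which is the claim. I expect the only delicate points to be verifying the uniform bound $L_k\le L_{\max}$ for the two non-standard step-size rules and checking that the BB quotient stays positive, so that the accepted step and the proximal map are well defined.
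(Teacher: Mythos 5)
Your proposal is correct and follows the paper's intended route: the paper gives no written proof of this theorem, saying only that it ``can be easily adapted from the proof in~\cite{beck2009fast},'' and your argument is precisely that adaptation --- the Beck--Teboulle one-step inequality, the polarization identity, and telescoping --- with the one genuinely new ingredient correctly identified, namely replacing the monotone step-size constants of the original analysis by the uniform bound $L_k \le \eta L$ (for ISTA-BB) or $L_k \le L$ (for ISTA-reverse), which suffices because $f(\bv_k)-f(\bv^*)\ge 0$ lets you pass from $\tfrac{2}{L_k}$ to $\tfrac{2}{L_{\max}}$ before summing. The BB-positivity point you flag as delicate is real but benign: co-coercivity of the convex, $L$-smooth loss gives $\langle \bm{\delta}_k, \mathbf{v}_k\rangle \ge \tfrac{1}{L}\|\mathbf{v}_k\|^2 \ge 0$, so the quotient vanishes only when the consecutive gradients coincide, and any positive safeguard floor in that degenerate case leaves your ceiling $L_k < \eta L$, and hence the $O(1/k)$ rate, intact.
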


\begin{theorem}\label{them:fista}
Let $\{\bv_k\}$ be the sequence generated by Algorithm~\ref{alg:l1fista}, for any $k > 1$, 
\begin{equation}
    f(\bv_k) - f(\bv^*) = O(\frac{1}{k^2}),
\end{equation}
where $\bv^*$ is an optimal solution to Eq~(\ref{ol1}). 
\end{theorem}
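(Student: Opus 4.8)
The plan is to adapt the Beck--Teboulle argument for FISTA, exploiting the fact that Algorithm~\ref{alg:l1fista} is initialized at the true Lipschitz constant $L$ of Theorem~\ref{th:lip} and only ever \emph{increases} $L_k$ through the backtracking loop. First I would record two structural facts that tame the line search. Since $L$ itself makes the quadratic upper bound behind the stopping criterion $f(p_{\bar L}(\wv_k))\le q_{\bar L}(p_{\bar L}(\wv_k),\wv_k)$ hold, the inner loop terminates after finitely many doublings with $L\le L_k\le \eta L$; and because each step multiplies $L_{k-1}$ by $\eta^{i_k}\ge 1$, the sequence $\{L_k\}$ is monotonically nondecreasing. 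This monotonicity is exactly the property the surrounding discussion flags as necessary, and it is what the telescoping step will consume.

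The engine is a single-step inequality for the proximal map. Writing $v_k:=f(\bv_k)-f(\bv^*)$, I would first show that whenever $\bar L = L_k$ passes the line search at the point $\wv$, then for \emph{every} $\bv$,
\begin{equation*}
f(\bv)-f(p_{L_k}(\wv))\ \ge\ \frac{L_k}{2}\bigl\|p_{L_k}(\wv)-\wv\bigr\|^2 + L_k\bigl\langle \wv-\bv,\ p_{L_k}(\wv)-\wv\bigr\rangle .
\end{equation*}
This follows by combining the stopping criterion (which bounds $f(p_{L_k}(\wv))$ above by $q_{L_k}$), the convexity of $l$ and of $g=\lambda\|\cdot\|_1$, and the first-order optimality condition defining $p_{L_k}(\wv)$ as the minimizer of the proximal subproblem. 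I would then instantiate it at $\wv=\wv_k$ twice, once with $\bv=\bv_{k-1}$ and once with $\bv=\bv^*$, and take the combination with weights $(t_k-1)$ and $1$.

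The combination is where the momentum update earns its keep. Setting $\mathbf{u}_k:=t_k\bv_k-(t_k-1)\bv_{k-1}-\bv^*$ and using the recursion $t_k^2-t_k=t_{k-1}^2$ together with $\wv_k=\bv_{k-1}+\frac{t_{k-1}-1}{t_k}(\bv_{k-1}-\bv_{k-2})$, the cross terms assemble --- via the identity $2\langle \mathbf{a},\mathbf{b}-\mathbf{a}\rangle=\|\mathbf{b}\|^2-\|\mathbf{a}\|^2-\|\mathbf{b}-\mathbf{a}\|^2$ --- into
\begin{equation*}
\frac{2 t_{k-1}^2}{L_{k-1}}\,v_{k-1}-\frac{2 t_k^2}{L_k}\,v_k\ \ge\ \|\mathbf{u}_k\|^2-\|\mathbf{u}_{k-1}\|^2 ,
\end{equation*}
where passing from the bare $2t_{k-1}^2/L_k$ on the left to $2t_{k-1}^2/L_{k-1}$ is precisely where $L_k\ge L_{k-1}$ and $v_{k-1}\ge 0$ are used. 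Summing this telescope, discarding $-\|\mathbf{u}_k\|^2\le 0$, and applying the single-step inequality once more to the first iterate to absorb the $k=1$ terms, I would obtain $\frac{2t_k^2}{L_k}v_k\le \|\bv_0-\bv^*\|^2$. A one-line induction from $t_1=1$ gives $t_k\ge (k+1)/2$, and with $L_k\le \eta L$ this yields $v_k\le \frac{2\eta L\,\|\bv_0-\bv^*\|^2}{(k+1)^2}=O(1/k^2)$.

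The main obstacle is the algebraic assembly in the third step: correctly weighting the two instances of the single-step inequality and matching them against the $t_k$-recursion so that everything collapses into $\|\mathbf{u}_k\|^2-\|\mathbf{u}_{k-1}\|^2$ --- a miscount in the weights or an index shift destroys the telescope, and one must verify $t_k\wv_k-(t_k-1)\bv_{k-1}-\bv^*=\mathbf{u}_{k-1}$ for the squared terms to line up. The one genuinely new ingredient relative to the constant-step proof is the bookkeeping of the varying $L_k$, namely placing the monotonicity $L_k\ge L_{k-1}$ where the telescope needs it and using $L_k\in[L,\eta L]$ to pin down the explicit constant. Everything else --- termination of the backtracking loop, the convexity inequalities, and the $t_k\ge(k+1)/2$ estimate --- is routine and carries over from~\cite{beck2009fast}.
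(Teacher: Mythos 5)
Your proposal is correct and is essentially the proof the paper intends: the paper offers no argument of its own for Theorem~\ref{them:fista}, stating only that it ``can be easily adapted from the proof in~\cite{beck2009fast}'', and your sketch is exactly that adaptation (the backtracking FISTA argument with the $\mathbf{u}_k$-telescope, the single-step proximal inequality, monotone nondecreasing $L_k$, and $t_k\ge (k+1)/2$). The only cosmetic slack is that since $L_0=L$ already satisfies the descent criterion, the line search of Algorithm~\ref{alg:l1fista} accepts $i_k=0$ at every iteration, so in fact $L_k=L$ for all $k$ and your interval $L\le L_k\le \eta L$ is valid but loose.
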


\subsection{Algorithms for Nonconvex Regularized Sparse Logistic Regression}

While the $\ell_1$-norm regularization is convenient since it's convex, several studies show that sometimes nonconvex regularization term can have better performance ~\cite{wen2018survey} though 
it turns the objective to nonconvex and even nonsmooth, which is challenging to obtain optimal solution.
Current literature lacks a unified yet simple framework that works for both convex  and a wide class of nonconvex regularization terms. With this consideration, we would like to list a bunch of nonconvex regularization terms that on one hand can ameliorate the bias problem, and on the other hand, can be solved with the same algorithms for the convex term.\\
\begin{table*}[htp!]
	\begin{center}
		\caption{Examples of regularization terms and corresponding proximal operators.}
		\label{tab:nonconvex}
		\begin{tabular}{c|c|c|c}
			\toprule
			\multicolumn{1}{c}{Penalty} &
			\multicolumn{1}{c}{$g_1(\beta_i)$} &
			\multicolumn{1}{c}{$g_2(\beta_i)$}  &
			\multicolumn{1}{c}{Proximal operator}  \\
			\midrule
			SCAD 
			& $\lambda|\beta_i|$ & 
			$\begin{cases} 
				0, &  |\beta_i| \leq \lambda \\
				\frac{\beta_i^2 - 2\lambda|\beta_i|+\lambda^2}{2(\theta-1)}, &  \lambda < |\beta_i| \leq \theta\lambda \\
				\lambda|\beta_i| - \frac{(\theta+1)\lambda^2}{2}, &  | \beta_i| > \theta\lambda \\
			\end{cases}$
			
			& $\begin{cases}
				\sign{t} \max{(|t|-\lambda), 0}, & |t| \leq 2\lambda \\
				\frac{(\theta-1)t - \sign{t} \theta\lambda}{\theta-2}, & 2\lambda < |t| \leq \theta\lambda \\
				t,  & |t| > \theta\lambda 
			\end{cases}$
			
			\\
			\midrule
			MCP 
			& $\lambda|\beta_i|$ & 
			$\begin{cases}
				\frac{\beta_i^2}{2\theta}, & \text{if} |\beta_i| \leq \theta\lambda \\
				\lambda|\beta_i| - \frac{\theta\lambda^2}{2}, & \text{if} |\beta_i| > \theta\lambda\\
			\end{cases}$  
			&
			$\begin{cases}
				0, & |t| \leq \lambda \\
				\frac{\sign{t} (|t|-\lambda)}{1-1/\theta}, & \lambda < |t| \leq \theta\lambda \\
				t, & |t| > \theta\lambda\\
			\end{cases}$
			\\
			
			\midrule
			$\ell_1$-norm 
			&  &   
			&
			$\sign{t} \max{(|t|-\lambda,0)}$
			\\
			\bottomrule
		\end{tabular}
	\end{center}
\end{table*}
\noindent Nonconvex regularization terms can be written as the difference between two convex functions as long as the Hessian is bounded~\cite{yuille2001concave}. For such nonconvex penalties, we are able to solve by ISTA with slight modifications.
\begin{equation}\begin{split}
		\label{nonconvex}
		\min_{\bv} f(\bv)= \underbrace{\sum_i ln(1+\exp(\xv_i^T\bv))-y_i(\xv_i^T\bv)}_{l(\bv)}+\underbrace{g_1(\bv)-g_2(\bv)}_{g(\bv)},
\end{split}\end{equation}
where the Hessian of $g(\bv)$ is bounded. We summarize our methods in Algorithm~\ref{alg:nonconvex1} and ~\ref{alg:nonconvex2} with modified backtracking line search criteria:
\begin{equation}
    f(\bv_{k+1}) \leq f(\bv_k) - \frac{L_k}{2}\|\bv_{k+1} - \bv_k\|^2.
\end{equation}

\begin{algorithm}[h!]
\caption{ISTA-BB: ISTA with Lipschitz constant and BB rule to solve Eq~(\ref{nonconvex}).}
\label{alg:nonconvex1}
\begin{algorithmic}
	\STATE Initialize ${\bv_0}$, step size $\frac{1}{L_0}$ as $\frac{1}{L}$, where $L$ is the Lipschitz constant of $\nabla l(\bv)$,  set $\eta>1$;
	\REPEAT
        \STATE 1) Start from $k=2$, update the step size $\frac{1}{L_k}$ using the Barzilai-Borwein (BB) rule
        \STATE 2) Find the smallest nonnegative integer $i_k$ such that with $\bar{L} = \eta^{i_k}L_{k}$ 
        we have
        $f(p_{\bar{L}}(\bv_{k-1})) \leq f(\bv_{k-1}) - \frac{\bar{L}}{2}\|p_{\bar{L}}(\bv_{k-1})-\bv_{k-1}\|^2 $\;.
	\STATE 3) Set $L_k = \eta^{i_k}L_{k}$ and update $\bv_k = p_{L_k}(\bv_{k-1})$\;
\UNTIL{convergence}
	\end{algorithmic}
\end{algorithm}

\begin{algorithm}[h!]
\caption{ISTA-reverse: ISTA with Lipschitz constant and reverse step size searching to solve Eq~(\ref{nonconvex}).}
\label{alg:nonconvex2}
\begin{algorithmic}
	\STATE Initialize ${\bv_0}$ randomly, step size $\frac{1}{L_0}$ as $\frac{1}{L}$, where $L$ is the Lipschitz constant of $\nabla l(\bv)$, set $\eta>1$;
	\REPEAT
        \STATE 1) Find the smallest nonnegative integer $i_k$ such that with $\bar{L} = L_0/ \eta^{i_k}$ 
        we have
        $f(p_{\bar{L}}(\bv_{k-1})) > f(\bv_{k-1}) - \frac{\bar{L}}{2}\|p_{\bar{L}}(\bv_{k-1})-\bv_{k-1}\|^2 $\;.
	\STATE 2) Set $L_k = L_0/ \eta^{i_k-1}$ and update $\bv_k = p_{L_k}(\bv_{k-1})$\;
\UNTIL{convergence}
	\end{algorithmic}
\end{algorithm}
The convergence of ISTA with different step-size searching methods is illustrated in Figure~\ref{fig:nonconvexAll} with SCAD.


\begin{figure}[!h]
\begin{minipage}[t]{0.485\linewidth}
    \centering
    \includegraphics[width=0.95\textwidth]{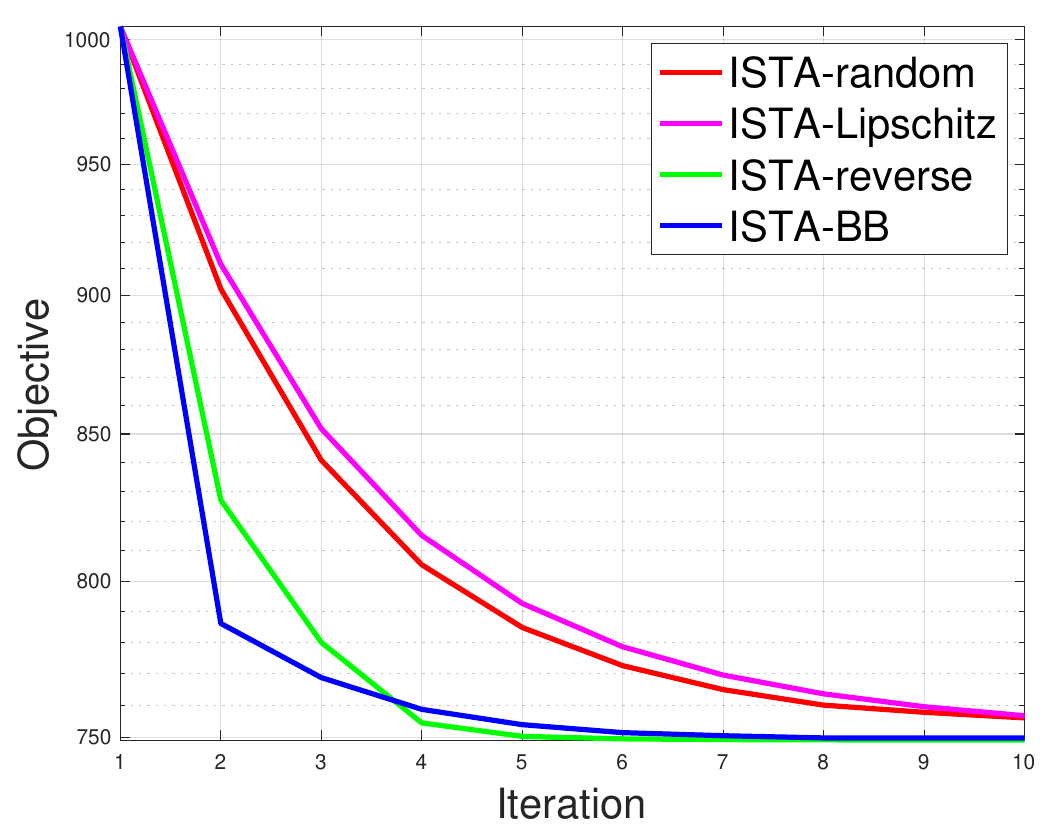}
    \caption{Nonconvex: convergence plots.}
    \label{fig:nonconvexAll}
\end{minipage}
\hspace{0.1cm}
\begin{minipage}[t]{0.485\linewidth} 
    	\centering
    \includegraphics[width=.95\linewidth]{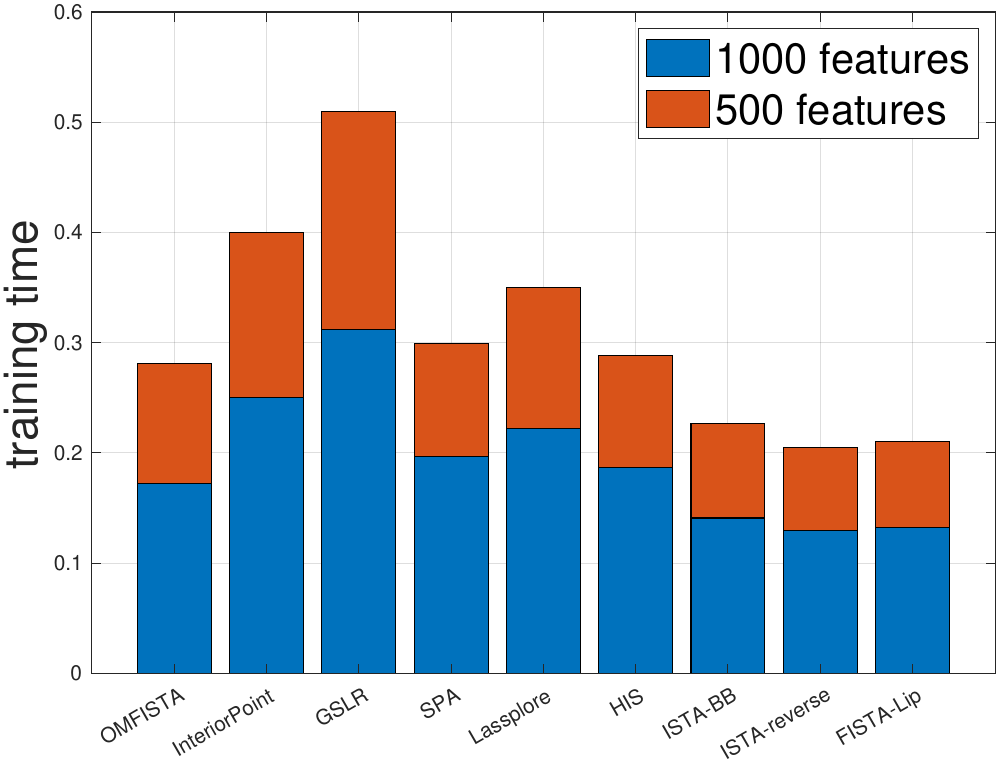}
    \caption{$\mathcal{L}_1$-norm regularized: training time comparison.}
    \label{fig:efficiency}
\end{minipage}        
\end{figure}  

\begin{theorem}\label{th:convergence}
Let $\{\bv_k\}$ be the sequence generated by either Algorithm~\ref{alg:nonconvex1} or Algorithm~\ref{alg:nonconvex2}, then all limit points of the sequence $\{\bv_k\}$ are critical points of the problem in Eq~(\ref{nonconvex}). More specifically, after $n$ iterations, we have
\begin{equation}
    \min_{0 \leq k \leq n} \|\bv_{k+1} - \bv_k\|^2 \leq \frac{2(f(\bv_0) - f(\bv^*))}{nL_{min}},
\end{equation}
where $\bv^*$ is a limit point of the sequence $\{\bv_k\}$, $L_{min}$ is the minimum $L$ among the $n$ iterations. 
\end{theorem}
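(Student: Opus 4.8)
The plan is to read the modified backtracking condition as a \emph{sufficient decrease} guarantee and then telescope it, deferring the identification of limit points to a separate limiting argument on the proximal optimality conditions. Before telescoping I would first verify that the line search in step~2 of either algorithm terminates and produces step sizes confined to a compact positive interval $[L_{min},L_{max}]$.

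Writing $g=g_1-g_2$ as in Eq~(\ref{nonconvex}) with $g_1$ convex and $g_2$ smooth with $\|\nabla^2 g_2\|\le M$, the proximal model $Q(\wv)=\tfrac{\bar L}{2}\|\wv-\bv_k+\tfrac{1}{\bar L}\nabla l(\bv_k)\|^2+g(\wv)$ is $(\bar L-M)$-strongly convex once $\bar L>M$, since the quadratic term dominates the concave $-g_2$ contribution. Combining the resulting inequality $Q(\bv_k)-Q(\bv_{k+1})\ge\tfrac{\bar L-M}{2}\|\bv_{k+1}-\bv_k\|^2$ with the descent lemma for $l$ (whose gradient is $L$-Lipschitz by Theorem~\ref{th:lip}) shows that the acceptance test $f(\bv_{k+1})\le f(\bv_k)-\tfrac{\bar L}{2}\|\bv_{k+1}-\bv_k\|^2$ is met as soon as $\bar L\ge L+M$. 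Hence backtracking stops after finitely many steps and every accepted $L_k$ obeys $L_{min}\le L_k\le\eta(L+M)$.

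Given the accepted inequality, I would rearrange it to $\tfrac{L_k}{2}\|\bv_{k+1}-\bv_k\|^2\le f(\bv_k)-f(\bv_{k+1})$ and sum over $k=0,\dots,n-1$. The right-hand side telescopes to $f(\bv_0)-f(\bv_n)$; because $\{f(\bv_k)\}$ is nonincreasing it converges, and its limit equals $f(\bv^*)$ at any limit point $\bv^*$, so $f(\bv_n)\ge f(\bv^*)$. Replacing each $L_k$ by $L_{min}$ and using that a minimum is at most an average, $n\min_{0\le k\le n}\|\bv_{k+1}-\bv_k\|^2\le\sum_{k=0}^{n-1}\|\bv_{k+1}-\bv_k\|^2\le\tfrac{2(f(\bv_0)-f(\bv^*))}{L_{min}}$, which is exactly the claimed rate.

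Finally, to show limit points are critical, I would strengthen the sum to full summability, $\sum_{k\ge0}\|\bv_{k+1}-\bv_k\|^2<\infty$, forcing $\|\bv_{k+1}-\bv_k\|\to0$. Since $\bv_{k+1}$ minimizes $Q$, its first-order condition reads $\bm{\delta}_k:=-L_k(\bv_{k+1}-\bv_k)-\nabla l(\bv_k)\in\partial g(\bv_{k+1})$. Along a subsequence $\bv_{k_j}\to\bar\bv$ the vanishing increment gives $\bv_{k_j+1}\to\bar\bv$ as well, while boundedness of $L_{k_j}$ and continuity of $\nabla l$ give $\bm{\delta}_{k_j}\to-\nabla l(\bar\bv)$; passing to the limit yields $0\in\nabla l(\bar\bv)+\partial g(\bar\bv)$. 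The main obstacle is precisely this last step: because $g$ is nonconvex and nonsmooth I must use the limiting (Clarke/Fr\'echet) subdifferential and invoke the closedness of its graph (outer semicontinuity), which is legitimate only because each penalty in Table~\ref{tab:nonconvex} is continuous and the smooth-plus-convex decomposition makes $\partial g=\partial g_1-\nabla g_2$ well behaved. Pinning down the bounded-Hessian condition on $g_2$ that simultaneously makes the line search terminate and keeps $\{L_k\}$ bounded is the one assumption that must be stated carefully to justify both the rate and the subdifferential limit.
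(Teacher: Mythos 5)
Your proof is correct, and its core coincides with the paper's: the paper's entire proof consists of reading the modified line-search test as the sufficient-decrease inequality $\frac{L_k}{2}\|\bv_{k+1}-\bv_k\|^2 \leq f(\bv_k)-f(\bv_{k+1})$, telescoping it, invoking $f(\bv_n)\geq f(\bv^*)$, and (implicitly) bounding the minimum increment by the average. Everything else you do is genuinely additional, and it addresses real gaps the paper leaves open. First, the paper never verifies that the backtracking terminates or that the accepted $L_k$ stay bounded; your DC argument --- strong convexity of the prox model once $\bar{L}>M$, combined with the descent lemma for $l$ (Lipschitz by Theorem~\ref{th:lip}), giving acceptance whenever $\bar{L}\geq L+M$ and hence $L_k\leq\eta(L+M)$ --- supplies exactly this, and it also surfaces a wrinkle worth flagging: Algorithm~\ref{alg:nonconvex2} starts at $L_0=L$ and only \emph{decreases} $\bar{L}$, while your own threshold shows acceptance at $\bar{L}=L$ is not automatic when $M>0$, so the reverse search requires the carefully stated bounded-Hessian assumption you identify (or a guard for the case where the criterion already fails at $L_0$). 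Second, the paper asserts $f(\bv_n)\geq f(\bv^*)$ with no justification; your monotonicity-plus-continuity argument is the right one. Third, and most substantively, the paper's proof never touches the theorem's first claim that limit points are critical (it ends at the rate inequality with ``based on which we will obtain the desired conclusion''); your route --- summability of $\|\bv_{k+1}-\bv_k\|^2$ forcing vanishing increments, the prox first-order condition $-L_k(\bv_{k+1}-\bv_k)-\nabla l(\bv_k)\in\partial g(\bv_{k+1})$, and passage to the limit via outer semicontinuity of the limiting subdifferential, legitimate since $\partial g=\partial g_1-\nabla g_2$ with $g_2$ continuously differentiable for the penalties in Table~\ref{tab:nonconvex} --- is the standard and correct way to establish it. The only cosmetic discrepancy is indexing: you telescope over $k=0,\dots,n-1$ where the paper sums through $k=n$; both yield the stated bound since $\min_{0\leq k\leq n}\|\bv_{k+1}-\bv_k\|^2 \leq \min_{0\leq k\leq n-1}\|\bv_{k+1}-\bv_k\|^2$.
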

\begin{proof}
	With the line search criterion, we have
	\begin{equation}
		\frac{L_k}{2} \|\bv_{k+1} - \bv_k \|^2 \leq f(\bv_k) - f(\bv_{k+1}),
	\end{equation}
	sum the above inequality we have 
	\begin{equation}
		\frac{L_{min}}{2} \sum_{k=0}^n\|\bv_{k+1} - \bv_k \|^2 \leq f(\bv_0) - f(\bv_{n+1}),
	\end{equation}
	With $f(\bv_n) \geq f(\bv^*)$, we have
	\begin{equation}
		\frac{L_{min}}{2} \sum_{k=0}^n\|\bv_{k+1} - \bv_k \|^2 \leq f(\bv_0) - f(\bv^*),
	\end{equation}
based on which we will obtain the desired conclusion.
\end{proof}

\begin{table*}[t]
\caption{Testing accuracy obtained from 5-fold cross-validation, with $\ell_1$-norm regularization.}
\begin{center}
\begin{tabular}{c | c |c c c c c c c c c c}
 \hline
 & $\lambda / \lambda_{max}$ & OMFISTA & Interior-Point & GSLR & SPA & Lassplore & HIS & Newton &  \textbf{ISTA-BB} & \textbf{ISTA-rev} & \textbf{FISTA-Lip} \\ 
\hline



 \multirow{3}{*}{Wine}  
 
 &0.02 & 0.920 &0.920 & 0.845 & 0.902 & 0.898 & 0.905 & 0.902 & 0.919 & \textbf{0.922} & \textbf{0.922} \\

 &0.1 & 0.907& 0.911 & 0.841 & 0.882 & 0.886 & 0.899 & 0.901 & 0.911 & \textbf{0.913} & 0.909\\

 &0.5 & 0.892& 0.891 & 0.825 & 0.856 & 0.852 & 0.896 & 0.895 & \textbf{0.908} & 0.902 & 0.899\\
 \hline
 \multirow{3}{*}{Specheart}  
 
 &0.02 & 0.739& 0.705 & 0.732 & 0.752 & 0.711 & 0.585 & 0.739 & 0.752 & 0.751 & \textbf{0.758} \\

 &0.1& 0.721& 0.691 & 0.716 & 0.735 & 0.702 & 0.571 & 0.722 & \textbf{0.739} & \textbf{0.739}  & 0.731\\

 &0.5 & 0.685 & 0.672 &\textbf{0.701}& 0.698 & 0.681 & 0.559 & 0.659 & 0.688 & 0.697 & \textbf{0.701}\\
 \hline
 \multirow{3}{*}{Ionosphere}  
 
 &0.02 &0.835 &  0.821 & 0.832 & 0.828 & 0.808 & 0.576 & 0.851 & 0.855 & 0.857 & \textbf{0.858} \\

 &0.1& 0.802 & 0.809 & 0.825 & 0.815 & 0.789 & 0.573 & 0.809 & 0.818 & \textbf{0.825} & 0.822\\

 &0.5 & 0.796 & 0.791 & 0.809 & 0.792 & 0.761 & 0.558 & 0.778 & 0.781 & \textbf{0.801} & \textbf{0.801}\\
 \hline
 \multirow{3}{*}{Madelon}  
 
 &0.02 & 0.615 & 0.611 & 0.612 & \textbf{0.621} & 0.601 & 0.605 & 0.615 & \textbf{0.621} & \textbf{0.621} & \textbf{0.621} \\

 &0.1& 0.603 & 0.601 & 0.601 & 0.611 & 0.592 & 0.601 & 0.611 & \textbf{0.615} & 0.611 & \textbf{0.615}\\

 &0.5 & 0.592 & 0.582 & 0.583 & 0.591 & 0.579 & 0.589 & 0.592 & 0.601 & 0.601 & \textbf{0.602}\\
 \hline
 \multirow{3}{*}{Arrhythmia}  
 
 &0.02 &0.611 & 0.591 & 0.601 & 0.608 & 0.599 & 0.589 & 0.608 & \textbf{0.615} & \textbf{0.615} & \textbf{0.615} \\

 &0.1& 0.591 & 0.588 & 0.589 & \textbf{0.595} & 0.591 & 0.578 & 0.588 & \textbf{0.595} & 0.592 & \textbf{0.595}\\

 &0.5 & 0.567 & 0.567 & 0.567 & 0.577 & 0.562 & 0.551 & 0.572 & \textbf{0.579} & \textbf{0.579} & \textbf{0.579}\\
 \hline
\end{tabular}
\end{center}
\label{tab:1}
\end{table*}

\begin{table}[t]
\caption{Testing accuracy from cross-validation with SCAD.}
\begin{center}
\begin{tabular}{c | c |c c c c}
 \hline
 & $\lambda / \lambda_{max}$ & GPGN & HONOR & \textbf{ISTA-BB} & \textbf{ISTA-rev}\\ 
\hline



 \multirow{3}{*}{Wine}  
 
 &0.02 & 0.921 & 0.925 & 0.929 & \textbf{0.931}\\

 &0.1& 0.912 & 0.912 & 0.915 & \textbf{0.917} \\

 &0.5 & 0.897 & 0.905 & 0.905 & \textbf{0.907}\\
 \hline
 \multirow{3}{*}{Specheart}  
 
 &0.02 & 0.751 & 0.761 & 0.761 & \textbf{0.763}\\

 &0.1& 0.732 & 0.735& 0.737 & \textbf{0.739}\\

 &0.5 & 0.691 & \textbf{0.711} & \textbf{0.711} & \textbf{0.711}\\
 \hline
 \multirow{3}{*}{Ionosphere}  
 
 &0.02 & 0.855 & 0.857 & \textbf{0.859} & 0.857\\

 &0.1& 0.825 & 0.827 & 0.829 & \textbf{0.831} \\

 &0.5 & 0.792 & 0.795 & 0.795 & \textbf{0.799} \\
 \hline
 \multirow{3}{*}{Madelon}  
 
 &0.02 & 0.621 & 0.625 & \textbf{0.631} & 0.628\\

 &0.1& 0.612 &0.616 & \textbf{0.619} & 0.615 \\

 &0.5 & 0.585 & 0.597 & 0.601 & \textbf{0.603}\\
 \hline
 \multirow{3}{*}{Arrhythmia}  
 
 &0.02 & 0.595 &0.609 & 0.611 & \textbf{0.618}\\

 &0.1& 0.581 & 0.592 & \textbf{0.597} & 0.595 \\

 &0.5 & 0.566 & 0.577 & 0.579 & \textbf{0.581}\\
 \hline
\end{tabular}
\end{center}
\label{tab:2}
\end{table}

\section{Experiments} \label{sec:4}



The empirical studies are conducted on the following 5 benchmark classification datasets, which can be found in  UCI machine learning repository~\cite{asuncion2007uci}:
Wine, Specheart, Ionosphere, Madelon, and Dorothea. 
For the logistic regression with convex $\ell_1$-norm penalty, we compare our proposed methods with the following counterparts: OMFISTA~\cite{zibetti2017accelerating},
Interior-Point method~\cite{koh2007interior}, GSLR~\cite{lenail2017graph}, SPA ~\cite{vono2018sparse}, Lassplore~\cite{liu2009large}, HIS~\cite{shi2010fast}, and proximal Newton~\cite{lee2014proximal}. For the logistic regression with nonconvex penalties, we compare our proposed methods with  GPGN~\cite{wang2019greedy} and HONOR ~\cite{gong2015honor}, and the nonconvex penalty we utilize in the experiment is SCAD. 
In this benchmark result, the classification performance is measured by the average testing accuracy obtained with $k$-fold cross-validation, in our experiment, we set $k=5$. 
It's known there exists a valid upper threshold for  $\lambda_{max}$ in logistic regression~\cite{koh2007interior}, when the regularization parameter is larger than that, the cardinality of the solution will be zero. Therefore $\lambda$ is usually selected as a fraction proportion of $\lambda_{max}$. We choose a 10-length path for $\lambda$, where the fraction is $0.01, 0.02, 0.05, 0.07, 0.1, 0.2, 0.3, 0.5, 0.7, 0.8$ respectively. All the other parameters are set  as suggested in the original papers.
In Table~\ref{tab:1} and Table~\ref{tab:2}, we show the testing accuracy with various $\lambda$ for $\ell_1$-norm and SCAD regularizer item, respectively. 


We compare the efficiency of our proposed methods with other methods. 
We show the computation time for the training with 1000 and 500 features. 
The number of samples is fixed at 1000 and with $\lambda / \lambda_{max}$ = 0.1. 
From Figure~\ref{fig:efficiency} we can see that in the $\ell_1$-norm regularized logistic regression problem, our proposed methods (ISTA-BB, ISTA-reverse, and FISTA-Lip) require less computation time to converge. The proximal Newton method is not included in the figure because its running time is way higher than the others, making it hard to visualize the time in the same figure. 
The nonconvex regularized logistic regression  follows a similar path, ISTA-BB and ISTA-reverse have better performance than GPGN and HONOR in terms of less time consumption. 
\begin{figure*}[ht!]
	\centering 
\subfigure[$\ell_1$-norm, feature]{\label{fig:21}\includegraphics[width=43mm]{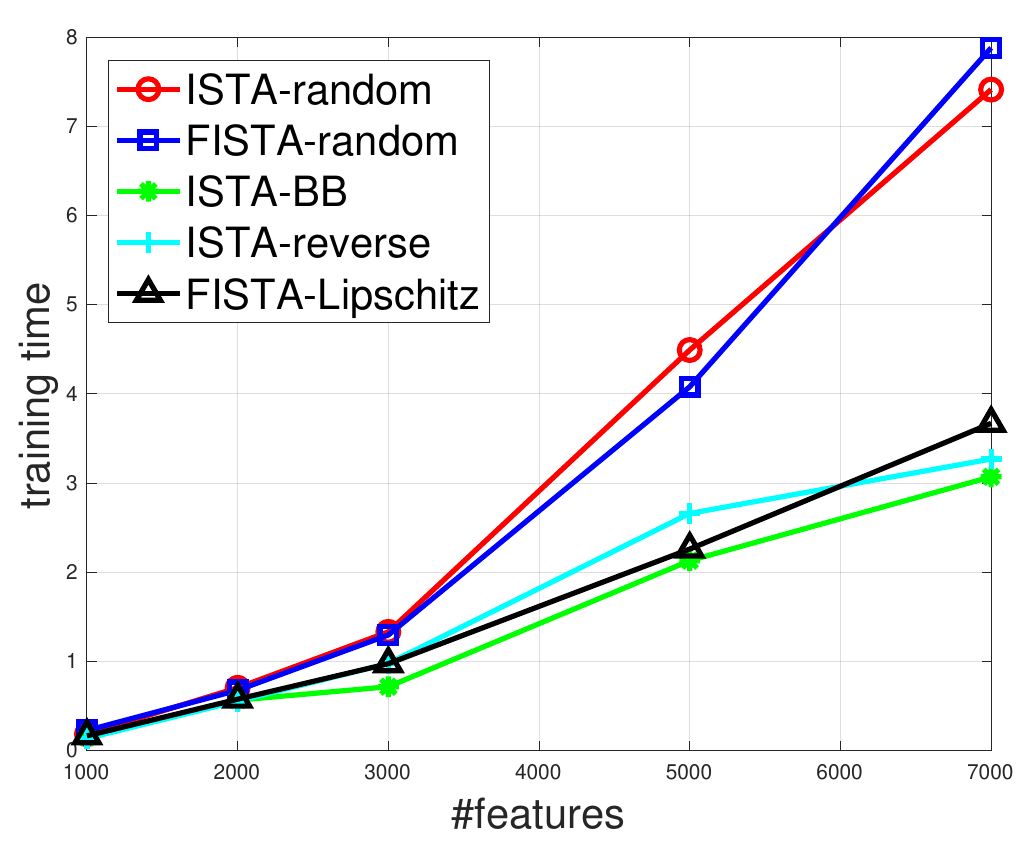}}
\subfigure[$\ell_1$-norm, sample]{\label{fig:22}\includegraphics[width=43mm]{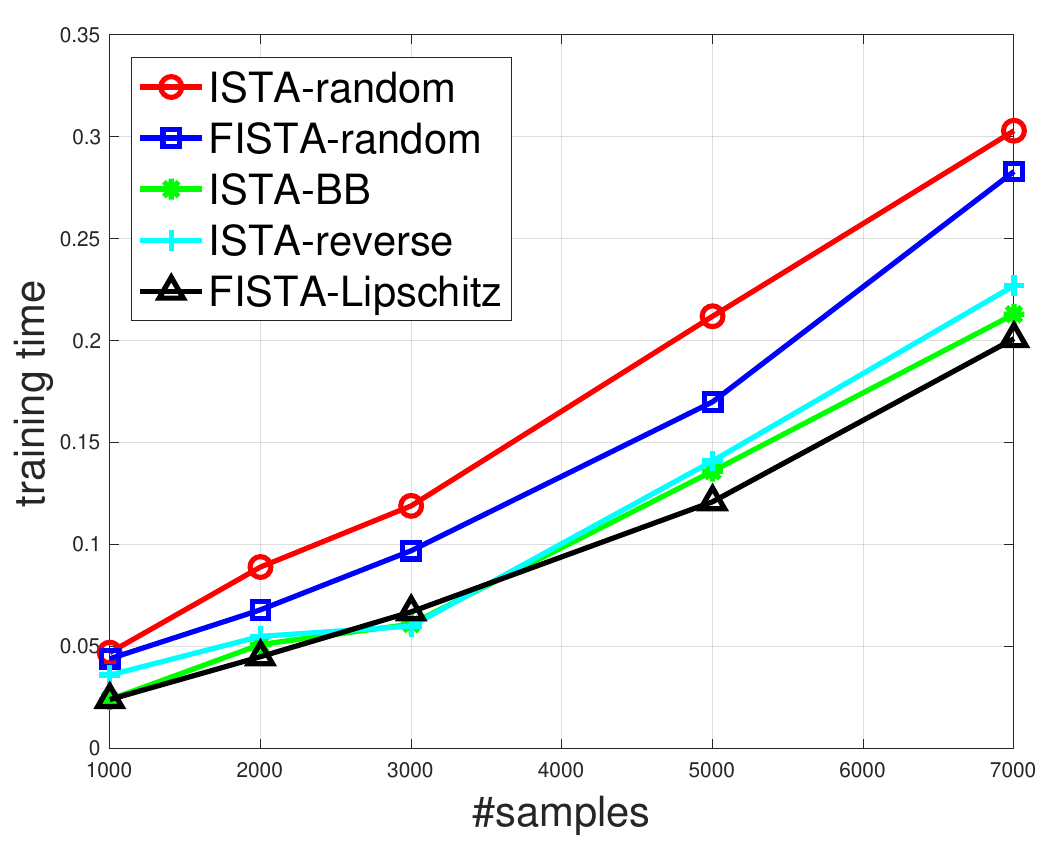}}
\subfigure[SCAD, feature]{\label{fig:23}\includegraphics[width=43mm]{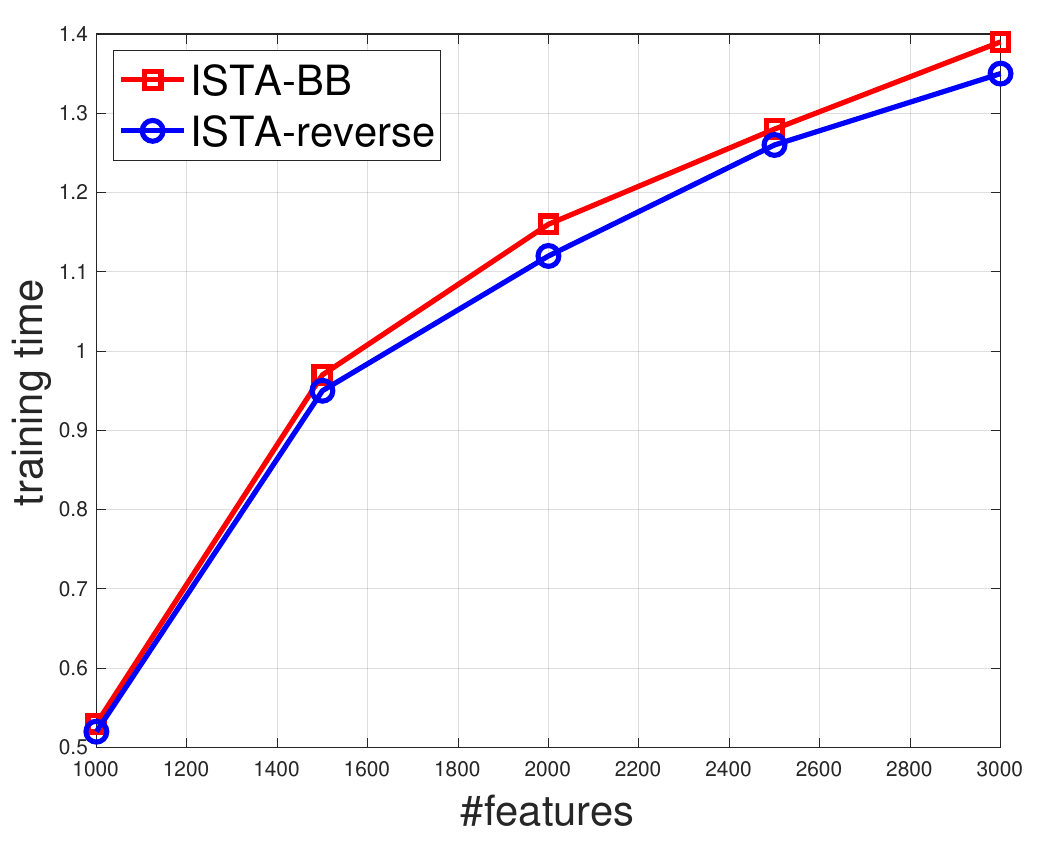}}
\subfigure[SCAD, sample]{\label{fig:24}\includegraphics[width=43mm]{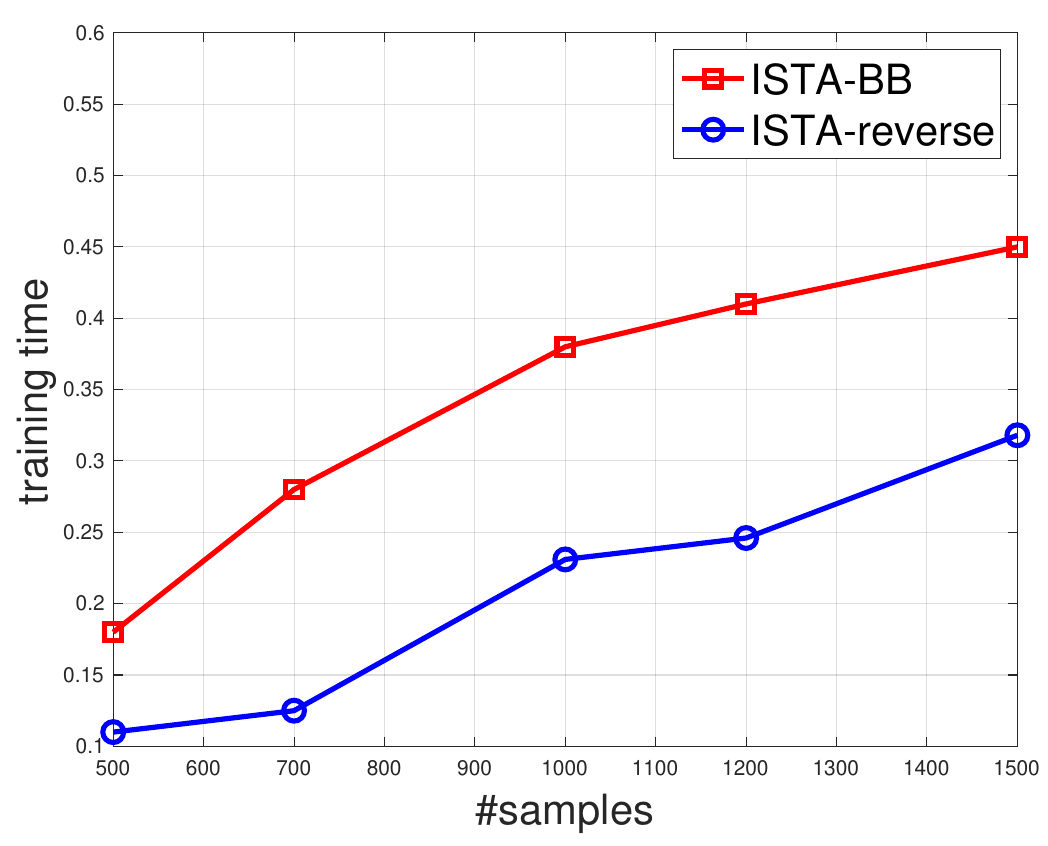}}
	\caption{Scalability study -- the influence of number of features and samples on  training time for each method to converge. From left to right: $\ell_1$-norm regularized with varying features, $\ell_1$-norm regularized with varying samples, SCAD regularized  with varying features and SCAD regularized with varying samples.}
	\label{fig:scale}
\end{figure*}
We also conduct numerical experiments to show the scalability of our methods. We varied the number of features and the number of samples to show how our algorithms perform with the size of samples increase. The results are illustrated in Figure~\ref{fig:scale}. 
For our proposed methods, the computation time is quite low even with a large number of features and samples, the trends of increasing computation time with increased features/samples are quite stable, there is no explosive increase in the computation time with the rapid growth in features/samples. 



\begin{figure}[!h]
\begin{minipage}[t]{0.485\linewidth}
    \centering
    \includegraphics[width=0.95\textwidth]{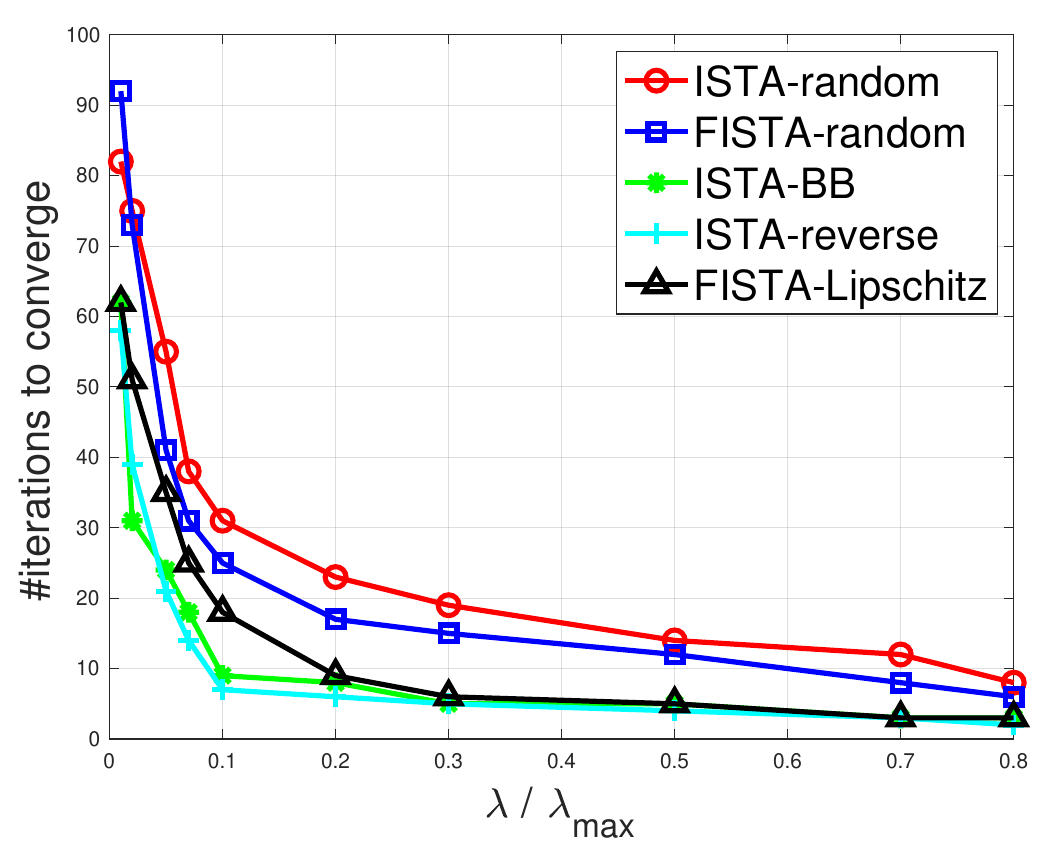}
    \caption{$\mathcal{L}_1$-norm regularized: comparison of iterations for each method to converge with varying $\lambda$.}
    \label{fig:convexLambdaIter}
\end{minipage}
\hspace{0.1cm}
\begin{minipage}[t]{0.485\linewidth} 
    \centering
    \includegraphics[width=0.95\textwidth]{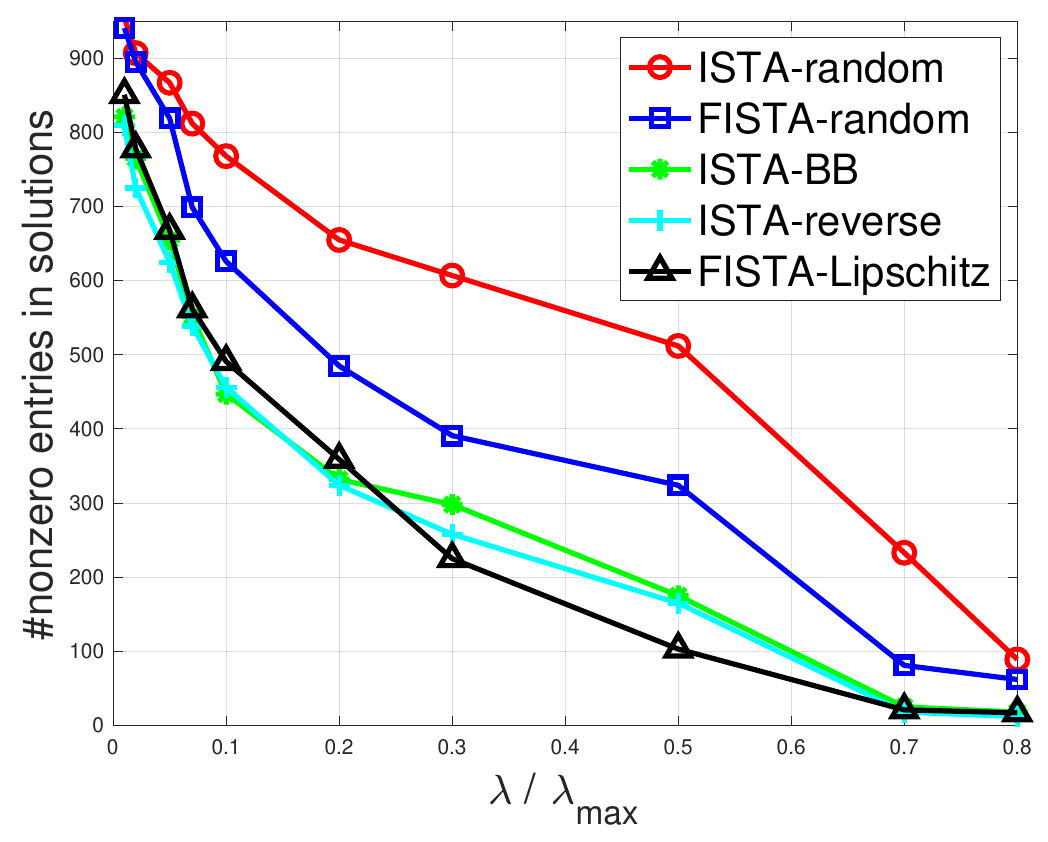}
    \caption{$\mathcal{L}_1$-norm regularized: comparison of sparsity of solutions obtained by each method with varying $\lambda$.}
    \label{fig:convexLambdaSparsity}
\end{minipage}        
\end{figure}  




We present the convergence  of our proposed methods along the regularization path, using a \textit{Ionosphere} data from the UCI repository, which is shown in Figure~\ref{fig:convexLambdaIter}. Similarly, we also include the original ISTA and FISTA methods to show the superiority of our proposed. The figure shows that our methods have similar performance in terms of convergence, but require less  iterations  to converge han that of vanilla ISTA and FISTA. 
We also explore the sparsity of the solutions, which is shown in Figure~\ref{fig:convexLambdaSparsity}. The numbers of nonzero entries obtained by our methods are pretty similar to each other, and the solutions are more sparse than the counterparts. In general, we can see that the regularization parameter $\lambda$ has effects on the number of iterations to converge and the sparsity of optimal solutions. 
Also, we note that the proximal Newton method is not able to induce sparsity even with large $\lambda$.


\section{Conclusions} \label{sec:5}

In this paper, we propose new optimization frameworks to solve sparse logistic regression problem which work for both convex and nonconvex regularization terms.
Experimental results on benchmark datasets with both types of regularizers demonstrate the advantages of our proposed algorithms compared to others in terms of both accuracy and efficiency. 

\bibliographystyle{plain}
\bibliography{sample-base}

\end{document}